\documentclass{article}
\pdfpagewidth=8.5in
\pdfpageheight=11in

\usepackage{ijcai23}

\usepackage{times}
\usepackage{soul}
\usepackage{url}
\usepackage[hidelinks]{hyperref}
\usepackage[utf8]{inputenc}
\usepackage[small]{caption}
\usepackage{graphicx}
\usepackage{amsmath}
\usepackage{amsthm}
\usepackage{booktabs}
\usepackage{algorithm}
\usepackage[switch]{lineno}
\usepackage[noend]{algpseudocode}
\usepackage{amssymb}
\usepackage{mathtools}
\usepackage{amsmath}
\usepackage{xcolor}
\usepackage{physics}
\usepackage{todonotes}


\urlstyle{same}






\pdfinfo{
/TemplateVersion (IJCAI.2023.0)
}


\newcommand{\sectref}[1]{Section~\ref{#1}}

\newcommand{\figref}[1]{Figure~\ref{#1}}
\newcommand{\tabref}[1]{Table~\ref{#1}}

\newcommand{\agref}[1]{Algorithm~\ref{#1}}


\newtheorem{proposition}{Proposition}


\newcommand{\startpara}[1]{{\vskip1pt\noindent{\bf #1.}}} 
\renewcommand{\url}[1]{{\def~{\char126}\sf#1}}
\newcommand{\pbox}[1]{\todo[inline, color=blue!10]{#1}}



\def\Rset{\mathbb{R}}

\def\sP{{\mathsf{P}}}

\def\cO{{\mathcal{O}}}

\def\cE{{\mathcal{E}}}

\def\cM{{\mathcal{M}}}
\def\cN{{\mathcal{N}}}
\def\cS{{\mathcal{S}}}
\def\cA{{\mathcal{A}}}
\def\cX{{\mathcal{X}}}

\def\cT{{\mathcal{T}}}

\def\cL{{\mathcal{L}}}
\def\cV{{\mathcal{V}}}



\title{Explainable Multi-Agent Reinforcement Learning for Temporal Queries}


\author{
 Kayla Boggess$^1$\and
 Sarit Kraus$^2$\And
 Lu Feng$^1$\\
 \affiliations
 $^1$University of Virginia\\
 $^2$Bar-Ilan University\\
 \emails
 \{kjb5we, lu.feng\}@virginia.edu, sarit@cs.biu.ac.il
 }

\begin{document}

\maketitle

\begin{abstract}
As multi-agent reinforcement learning (MARL) systems are increasingly deployed throughout society, it is imperative yet challenging for users to understand the emergent behaviors of MARL agents in complex environments. This work presents an approach for generating policy-level contrastive explanations for MARL to answer a temporal user query, which specifies a sequence of tasks completed by agents with possible cooperation. The proposed approach encodes the temporal query as a PCTL$^*$ logic formula and checks if the query is feasible under a given MARL policy via probabilistic model checking. Such explanations can help reconcile discrepancies between the actual and anticipated multi-agent behaviors. The proposed approach also generates correct and complete explanations to pinpoint reasons that make a user query infeasible. We have successfully applied the proposed approach to four benchmark MARL domains (up to 9 agents in one domain). Moreover, the results of a user study show that the generated explanations significantly improve user performance and satisfaction. 

\end{abstract}

\section{Introduction} \label{sec:intro} 

As multi-agent reinforcement learning (MARL) systems are increasingly deployed throughout society, it is imperative yet challenging for users to understand the emergent behaviors of MARL agents in complex environments.
Recent work~\cite{boggess2022toward} proposes methods for generating policy summarization to explain agents' behaviors under a given MARL policy, as well as language explanations to answer user queries about agents' decisions such as ``Why don't [\emph{agents}] do [\emph{actions}] in [\emph{states}]?'' 
However, existing methods cannot handle temporal queries involving a sequence of MARL agents' decisions, for example, ``Why don't [\emph{agents}] complete [\emph{task 1}], followed by [\emph{task 2}], and eventually [\emph{task 3}]?'' 
Explanations to answer such a temporal user query can help reconcile discrepancies between the actual and anticipated agent behaviors.

Recently, there has been increasing interest in generating policy-level contrastive explanations for RL in the single-agent setting.   
\cite{sreedharan2022bridging} considers a problem setting where the agent comes up with a plan to achieve a certain goal, and the user responds by raising a foil (represented as a sequence of agent states and actions). To show why the agent's plan is preferred over the foil (e.g., the foil leads to an invalid state), explanations are generated by finding missing preconditions of the failing foil action on a symbolic model through sample-based trials.
\cite{finkelstein2022deep} considers a similar problem setting, where the user queries about an alternative policy specifying actions that the agent should take in certain states. 
Explanations are defined as a sequence of Markov decision process (MDP) transforms, such that the RL agent's optimal policy (i.e., seeking to maximize its accumulated reward) in the transformed environment aligns with the user queried policy.

There are many challenges and limitations when applying these approaches in multi-agent environments.
First, we need a better representation of user queries. Asking the user to provide concrete information about agents' joint states and joint actions, which grow exponentially with the increasing number of agents, is tedious, if not impractical. 
Further, these approaches have limited scalability in multi-agent environments due to computational complexity. 
\cite{sreedharan2022bridging} requires a large number of samples generated via a random walk to find missing preconditions.  
\cite{finkelstein2022deep} computes a sequence of MDP transforms (e.g., mapping the entire state/action space) and retrains the agent policy in each transformed MDP.
Moreover, the generated explanations may not capture agent cooperation requirements that are essential for understanding multi-agent behaviors. 


We address these challenges by developing an approach to generate policy-level contrastive explanations for MARL. 
Our proposed approach takes the input of a temporal user query specifying which tasks should be completed by which agents in what order.
Any unspecified tasks are allowed to be completed by the agents at any point in time.
The user query is then encoded as a PCTL$^*$ logic formula, which is checked against a multi-agent Markov decision process (MMDP) representing an abstraction of a given MARL policy via \emph{probabilistic model checking}~\cite{KNP17}.
If the MMDP satisfies the PCTL$^*$ formula, then the user query is feasible under the given policy (i.e., there exists at least one policy execution that conforms with the user query).
Otherwise, our approach deploys a guided rollout procedure to sample more of the MARL agents' behaviors and update the MMDP with new samples.  
If the updated MMDP still does not satisfy the PCTL$^*$ formula, the proposed approach generates correct and complete explanations that pinpoint the causes of all failures in the user query.  

Computational experiments on four benchmark MARL domains demonstrate the scalability of our approach (up to 9 agents in one domain). 
It only took seconds to check the feasibility of a user query and generate explanations when needed.
Additionally, we conducted a user study to evaluate the quality of generated explanations, where we adapted~\cite{sreedharan2022bridging} to generate baseline explanations. 
The study results show that, compared with the baseline, explanations generated using our approach significantly improve user performance (measured by the number of correctly answered questions) and yield higher average user ratings on \emph{explanation goodness metrics} (e.g., understanding, satisfaction)~\cite{hoffman2018metrics}.

\section{Related Work} \label{sec:related} 

\subsection{Explainable Reinforcement Learning}
A growing body of research in explainable RL has emerged in recent years, as surveyed in~\cite{wells2021explainable,heuillet2021explainability,puiutta2020explainable}.
Existing works can be categorized according to different axes (e.g., timing, scope, form, setting). We position our proposed approach based on these categorizations as follows.  

First, there are \emph{intrinsic} and \emph{post-hoc} methods depending on the timing when the explanation is generated. The former (e.g., \cite{topin2021iterative,landajuela2021discovering}) builds intrinsically interpretable policies (e.g., represented as decision trees) at the time of training, while the latter (e.g., \cite{sreedharan2022bridging,hayes2017improving}) generates post-hoc explanations after a policy has been trained.
Our proposed approach belongs to the latter.

Second, existing works can be distinguished by the scope of explanations. Some methods provide explanations about policy-level behaviors (e.g., \cite{topin2019generation,amir2018highlights}), while others explain specific, local decisions (e.g., \cite{olson2021counterfactual,madumal2020explainable}). 
Our work focuses on explaining discrepancies between actual and anticipated policy-level behaviors. 

Additionally, current approaches generate explanations in diverse forms, including natural language~\cite{hayes2017improving}, saliency maps~\cite{atrey2019exploratory}, reward decomposition~\cite{juozapaitis2019explainable}, finite-state machines~\cite{danesh2021re}, and others. 
Our proposed approach generates language explanations following~\cite{hayes2017improving} and \cite{boggess2022toward}, both of which use the Quine-McCluskey algorithm to compute a minimized Boolean formula and then translate the formula into an explanation using language templates. 

Finally, the majority of existing works on explainable RL focus on the single-agent setting.
There is very little prior work considering multi-agent environments.
\cite{heuillet2022collective} estimates the contribution of each agent for a group plan, but only as a general explanation of a model and not for a specific instance given by a user.
\cite{boggess2022toward} develops methods to generate policy summarization and query-based language explanations for MARL. However, as discussed in \sectref{sec:intro}, existing methods cannot handle temporal queries considered in this work.

\subsection{Contrastive Explanations}

\cite{miller2019explanation} identifies being \emph{contrastive} (``Why $A$ but not $B$?'') as one of the key desired properties of an explanation.
The research thread on contrastive explanations for RL has been drawing increasing attention since then.
For example, \cite{madumal2020explainable} generates contrastive explanations for \emph{``why action''} and \emph{``why not action''} queries via counterfactual analysis of a structural causal model; 
\cite{lin2021contrastive} develops a deep RL architecture with an embedded self-prediction model to explain why a learned agent prefers one action over another;
and \cite{olson2021counterfactual} computes counterfactual state explanations (i.e., minimal changes needed for an alternative action).
These works all focus on generating contrastive explanations for the RL agent's local decisions in a state.
By contrast, several recent works~\cite{sreedharan2022bridging,finkelstein2022deep} generate policy-level contrastive explanations in the single-agent setting. 
However, as discussed in \sectref{sec:intro}, these methods are not suitable for MARL.
Our proposed approach advances the state of the art by developing an approach for generating contrastive explanations about MARL agents' policy-level behaviors.

\section{Program Formulation} \label{sec:problem} 

We consider a problem setting where a MARL policy has been trained over $N$ agents, denoted by $\pi: \cX \to \Delta (\cA)$, which is a function mapping a set of joint states $\cX=\{\vb{x}=(x^1, \dots, x^N)\}$ to a distribution over a set of joint actions $\cA=\{\vb{a}=(a^1, \dots, a^N)\}$.
Execution of policy $\pi$ yields a sequence of joint states and joint actions 
$\vb{x}_0 \xrightarrow{\vb{a}_0}\vb{x}_1 \xrightarrow{\vb{a}_1} \cdots$
where $\vb{a}_t \sim \pi(\cdot | \vb{x}_t)$ at each step $t$. 
Suppose that the goal of the agents is to jointly complete a set $G$ of tasks (sub-goals). 
Let $R^i: \cX \times \cA \times \cX \to \Rset$ denote the reward function that determines the immediate reward received by agent $i$. 
A positive reward $R^i(\vb{x}_t, \vb{a}_t, \vb{x}_{t+1})>0$ is only received when a task $g \in G$ is completed by agent $i$ at step $t$. 
We assume that each agent can complete at most one task at a step and, if multiple agents cooperate to complete a task, each of them would receive a positive reward at the same step.

To start with, the user is presented with a high-level plan that summarizes one possible execution of the given MARL policy $\pi$. 
For example, consider a MARL domain where three robotic agents are trained to complete search and rescue tasks shown in \figref{fig:domain}(a). We can compute a high-level plan by applying the policy summarization method proposed in~\cite{boggess2022toward}. \figref{fig:domain}(b) illustrates an example plan, where columns indicate the order of tasks completed by agents and each row corresponds to an agent's task sequence.
Agent cooperation is represented by multiple agents sharing the same task in the same column.
In this example, robots II and III first cooperate to fight the fire, followed by robots I and II jointly removing the obstacle, and finally robots I and III rescue the victim together.

\begin{figure}[t]
     \centering
         \includegraphics[width=\columnwidth]{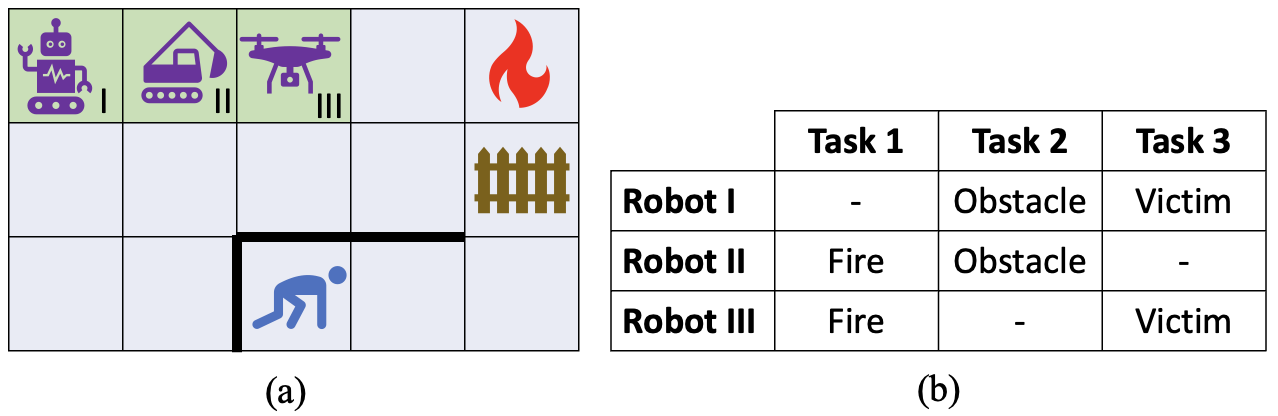}
     \caption{Example MARL domain and a high-level plan.}
     \label{fig:domain}
\end{figure}

The user may not desire the presented plan and raise an alternative query. The user query does not have to be a complete plan involving all agents and tasks. Instead, the user can query about a partial plan such as \emph{``Why don't robots I and II remove the obstacle before robot II fights the fire alone?''}
We define a \emph{temporal user query} as a list of atomic propositions specifying an order of tasks completed by some agents, denoted by $\rho= \langle \tau_1, \tau_2, \cdots \rangle$, where each $\tau$ specifies a task $g \in G$ and designated agents.
Tasks not specified in the query can be completed in any order (e.g., before $\tau_1$, between $\tau_1$ and $\tau_2$, or after $\tau_2$).
The aforementioned example query is denoted by $\langle \mathsf{obstacle\_robotI\_robotII}, \mathsf{fire\_robotII} \rangle$.

A temporal user query $\rho$ is \emph{feasible} under a MARL policy $\pi$ if there exists at least one execution of $\pi$ that conforms with the queried plan $\rho$. When $\rho$ is infeasible under $\pi$, explanations are generated to reconcile discrepancies between the actual and anticipated multi-agent behaviors.
We say that an explanation is \emph{correct} if it pinpoints the causes of one or more failures in $\rho$ (e.g., unsatisfied task preconditions or agent cooperation requirements).
A correct explanation is \emph{complete} if it identifies the reasons behind all failures of a user query $\rho$. 

This work aims to solve the following problem.

\pbox{\textbf{Problem:} Given a temporal user query $\rho$ and a trained MARL policy $\pi$, check if $\rho$ is feasible under policy $\pi$. If $\rho$ is infeasible, generate correct and complete explanations to reconcile discrepancies between the actual and anticipated multi-agent behaviors.}

\section{Approach} \label{sec:approach} 

\begin{algorithm}[b]
\footnotesize
\caption{Checking the feasibility of a user query}
\label{ag:overview}
\textbf{Input}: a temporal user query $\rho$, a trained MARL policy $\pi$ \\
\textbf{Output}: YES, or explanations $\cE$
\begin{algorithmic}[1] 
\State construct a policy abstraction MMDP $\cM$ given $\pi$
\State encode the temporal query $\rho$ as a PCTL$^*$ formula $\varphi$
\If {$\cM$ satisfies $\varphi$}
    \State \Return YES
\Else
    \State $\cM' \gets$ update $\cM$ via guided rollout (\agref{ag:rollout})
    \If {$\cM'$ satisfies $\varphi$}
        \State \Return YES
    \Else
        \State generate explanations $\cE$ (\agref{ag:explanation})
        \State \Return $\cE$
    \EndIf   
\EndIf
\end{algorithmic}
\end{algorithm}

To tackle this problem, we present an approach as illustrated in \agref{ag:overview}.
We describe the construction of a policy abstraction (line 1) in \sectref{sec:mmdp}, the encoding and checking of the user query (lines 2-5) in \sectref{sec:logic}, guided rollout (lines 6-9) in \sectref{sec:rollout}, and explanation generation (lines 10-11) in \sectref{sec:qm}.
Additionally, we analyze the correctness and complexity of the approach in \sectref{sec:properties}.

\subsection{Policy Abstraction MMDP} \label{sec:mmdp}

Given a trained MARL policy $\pi$, we construct a multi-agent Markov decision process (MMDP) abstraction following the policy abstraction method described in~\cite{boggess2022toward}.
We denote an MMDP as a tuple $\cM=(\cS, \vb{s}_0, \cA, \cT, \cL)$ with a set of joint abstract states $\cS$, an initial state $\vb{s}_0 \in \cS$, a set of joint actions $\cA$, a transition function $\cT: \cS \times \cA \to \Delta(\cS)$, and a labeling function $\cL: \cS \to 2^{AP}$ that assigns a set of atomic propositions $AP$ to states.  
A path through $\cM$ is a sequence 
$\vb{s}_0 \xrightarrow{\vb{a}_0}\vb{s}_1 \xrightarrow{\vb{a}_1} \cdots$
starting from the initial state $\vb{s}_0$.

The state space $\cS=\{\vb{s}=(s^1, \dots, s^N)\}$ is defined over a set of Boolean predicates indicating whether a task $g\in G$ has been completed by agent $i$.
The initial state $\vb{s}_0$ represents that none of the tasks has been completed.
In the example MMDP shown in \figref{fig:mmdp}, the initial state is $\vb{s}_0=(000, 000, 000)$. 
State $\vb{s}_1=(000, 100, 100)$ represents that the fire task has been completed by robotic agents II and III, which is labeled with $\cL(\vb{s}_1)=\{\mathsf{fire\_robotII\_robotIII}\}$.
The next state $\vb{s}_2=(010, 110, 100)$ is labeled with $\cL(\vb{s}_2)=\{\mathsf{obstacle\_robotI\_robotII}\}$, which only contains the newly completed obstacle task.

The MMDP transition function $\cT$ is built by finding corresponding abstract transitions $(\vb{s},\vb{a},\vb{s}')$ of each sample $(\vb{x},\vb{a},\vb{x}')$ observed during the MARL policy evaluation, and transition probabilities are computed via frequency counting.
Given a joint state $\vb{x}=(x^1, \dots, x^N)$, we determine a corresponding joint abstract state $\vb{s}=(s^1, \dots, s^N)$ by checking if agent $i$ receives a reward $R^i(\vb{x}, \vb{a}, \vb{x}')>0$ for completing a task $g \in G$. 
For each MMDP state $\vb{s} \in \cS$, we keep track of a set of corresponding sampled joint states, denoted by $X(\vb{s}) = \{\vb{x}\}$, and count the total number of observed MARL samples, denoted by $C(\vb{s})$.

\begin{figure}[t]
     \centering
         \includegraphics[width=0.7\columnwidth]{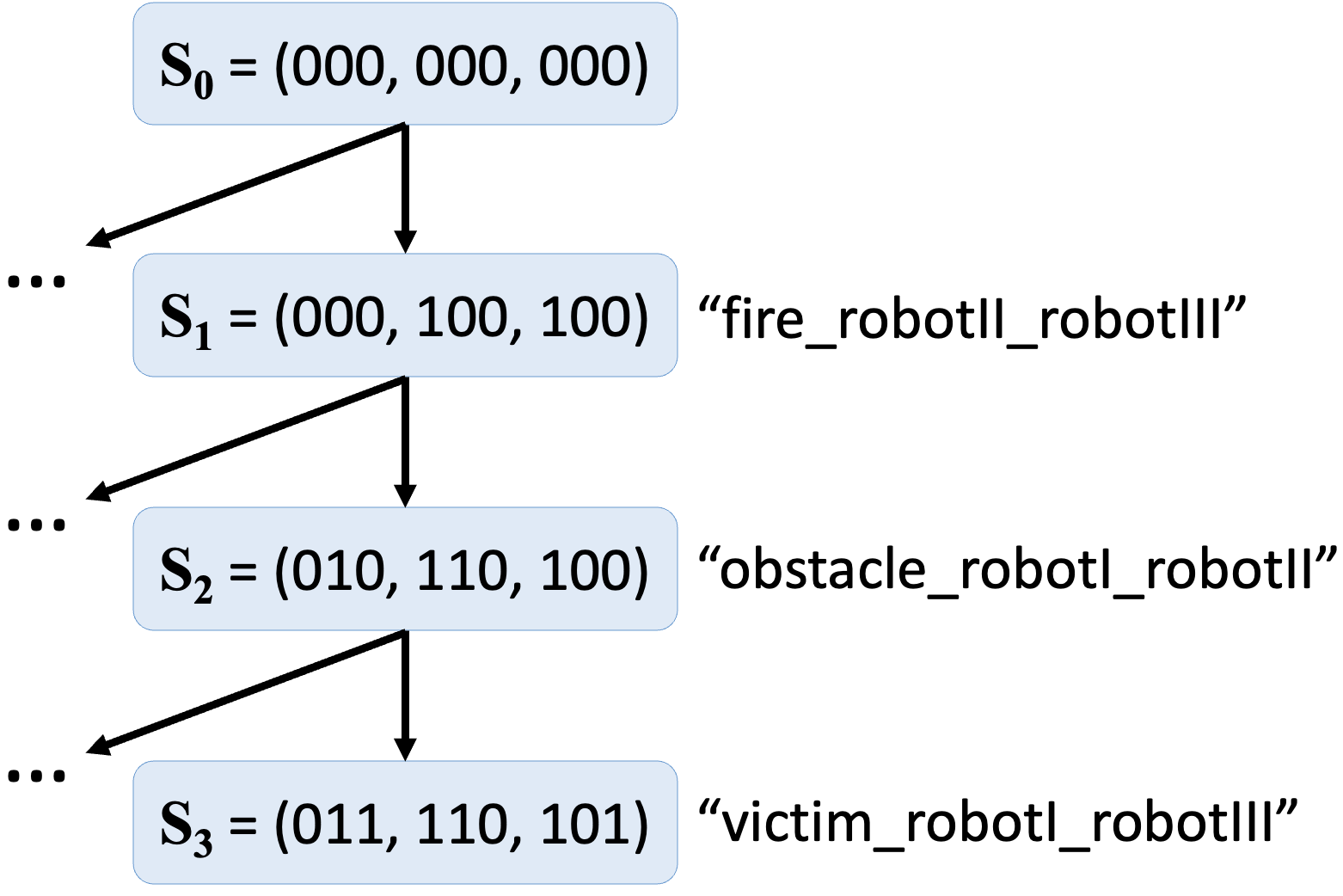}
     \caption{Fragment of an example MMDP.}
     \label{fig:mmdp}
\end{figure}

\subsection{Query Checking with Temporal Logic} \label{sec:logic}

We encode a temporal user query $\rho= \langle \tau_1, \tau_2, \cdots \rangle$ as a PCTL$^*$ logic~\cite{aziz1995usually} formula $\varphi$ with a ``sequencing'' specification template as follows.
$$
\varphi = \sP_{>0} [\lozenge (\tau_1 \wedge \lozenge (\tau_2 \wedge \lozenge \cdots))] 
$$
where $\sP_{>0}$ means that the specification should be satisfied with non-zero probability, 
and $\lozenge$ denotes the logical operator ``eventually''.
The PCTL$^*$ formula $\varphi$ is satisfied in an MMDP $\cM$ if there exists a path through $\cM$ such that $\tau_1$ eventually becomes true at some point along the path, and $\tau_2$ eventually holds at some point afterward.
For example, the MMDP shown in \figref{fig:mmdp} satisfies a PCTL$^*$ formula
$\sP_{>0} [\lozenge (\mathsf{fire\_robotII\_robotIII} \wedge \lozenge \mathsf{victim\_robotI\_robotIII})]$.

To check if $\cM$ satisfies $\varphi$, we apply \emph{probabilistic model checking}~\cite{KNP17} which offers efficient techniques for the exhaustive exploration of $\cM$ to determine if $\varphi$ holds in any path.  
If $\cM$ satisfies $\varphi$, then \agref{ag:overview} returns YES, indicating that the user query is feasible under the given MARL policy. 
Otherwise, there does not exist any path through $\cM$ that conforms with the user query. 
Since the MMDP $\cM$ is constructed based on samples observed during the MARL policy evaluation, it does not necessarily capture all possible agent behaviors under the given policy $\pi$. 
Thus, $\cM$ not satisfying $\varphi$ is not a sufficient condition for claiming that the user query is infeasible under the given MARL policy. 
To address this issue, we develop a guided rollout procedure to update the MMDP $\cM$ via drawing more samples from the MARL policy $\pi$.

\subsection{Guided Rollout} \label{sec:rollout}

\begin{algorithm}[tb]
\footnotesize
\caption{Guided rollout}
\label{ag:rollout}
\textbf{Input}: a trained MARL policy $\pi$, a policy abstraction MMDP $\cM$ \\
\textbf{Output}: an updated MMDP $\cM'$
\begin{algorithmic}[1] 
\State unfold $\cM$ as a search tree and assign a $U$ value to each node
\State $\cN \gets$ tree nodes ordered by $U$ values and sample counts
\For{($k = 0; k < \mathtt{RolloutNum}; k\mbox{++})$}
    \State $\vb{s} \gets \cN.\mbox{pop(0)}$ 
    \State $\vb{x} \gets$ pick a corresponding joint state from $X(\vb{s})$
    \State $\delta \gets$ a rollout execution of $\pi$ from $\vb{x}$ with $\mathtt{DepthLimit}$
    \State update the MMDP with samples in $\delta$
\EndFor
\State \Return the updated MMDP $\cM'$
\end{algorithmic}
\end{algorithm}

\agref{ag:rollout} illustrates the guided rollout procedure, which starts by unfolding paths of the MMDP $\cM$ as a search tree. The root node of the tree is the initial state $\vb{s}_0$ of $\cM$. 
As the search tree unfolds, we assign a $U$ value to each node representing the degree to which the path from the root node to the current node conforms with the user query. 
Consider an example user query 
$\langle \tau_1=\mathsf{fire\_robotII\_robotIII}, \tau_2=\mathsf{obstacle\_robotII} \rangle$,
unfolding the MMDP in \figref{fig:mmdp} yields 
$U(\vb{s}_0)=0$, $U(\vb{s}_1)=1$ for conforming with $\tau_1$, and $U(\vb{s}_2)=-\infty$ for violating $\tau_2$.
The search tree stops expanding a node with $U=-\infty$ since the user query is already violated along the path. 

Let $\cN$ be a queue of tree nodes ordered by decreasing $U$ values and, for nodes with the same $U$ value, increasing counts of MARL samples $C(\vb{s})$.
This ordering prioritizes the exploration of states with a higher degree of user query conformance (i.e., $U$ values) and less sampling.
Given a joint abstract state $\vb{s} \in \cN$, we (randomly) pick a corresponding joint state $\vb{x} \in X(\vb{s})$ and generate a rollout execution $\delta = \vb{x} \xrightarrow{\vb{a}}\vb{x'} \xrightarrow{\vb{a'}} \cdots$ of the policy $\pi$ starting from $\vb{x}$.
The rollout depth $|\delta|$ is bounded by a predefined parameter $\mathtt{DepthLimit}$.
We update the MMDP with samples observed in $\delta$.
Then, we consider the next node in $\cN$ and repeat the above process (lines 4-7 of \agref{ag:rollout}). 
When the number of rollout executions hits a predefined parameter $\mathtt{RolloutNum}$, \agref{ag:rollout} terminates with an updated MMDP, denoted by $\cM'$.

We check if $\cM'$ satisfies a PCTL$^*$ formula $\varphi$ encoding the user query $\rho$ (line 7 of \agref{ag:overview}) as described in \sectref{sec:logic}. 
If $\cM'$ satisfies $\varphi$, then the user query $\rho$ is feasible under the given MARL policy $\pi$. 
When $\cM'$ does not satisfy $\varphi$, the user query is infeasible in the MMDP $\cM'$.
Given sufficiently large $\mathtt{RolloutNum}$ and $\mathtt{DepthLimit}$, the MMDP $\cM'$ provides a good approximation of MARL agents' behaviors under the given policy $\pi$.
Thus, we can claim that the user query $\rho$ is infeasible under $\pi$ with high probability.
In this case, we generate explanations to reconcile discrepancies between the actual and anticipated multi-agent behaviors.

\subsection{Explanation Generation} \label{sec:qm}

\begin{algorithm}[tb]
\footnotesize
\caption{Generating reconciliation explanations}
\label{ag:explanation}
\textbf{Input}: a user query $\rho= \langle \tau_1, \tau_2, \cdots \rangle$, the updated MMDP $\cM'$ \\
\textbf{Output}: explanations $\cE$
\begin{algorithmic}[1] 
\State $\cE \gets \{\}$
\While{$\rho$ is infeasible in $\cM'$}
    \State $U^\mathsf{max} \gets$ the maximum $U$ value in the search tree of $\cM'$
    \State find a failure $\tau_j$ where $j=U^\mathsf{max} + 1$ 
    \State $\cV \gets$ target MMDP states that complete the task in $\tau_j$
    \State $\bar{\cV} \gets$ non-target MMDP states
    \If{$\cV \not= \emptyset$}
    \State $\phi \gets$ Quine-McCluskey(1=$\mathtt{binary}(\cV)$, 0=$\mathtt{binary}(\bar{\cV})$)
    \State $\epsilon \gets$ select a minterm in $\phi$ that is closest to $\rho$
    \EndIf
    \State $\cE \gets$ insert language explanations
    \State update $\rho$ to fix the failure $\tau_j$
\EndWhile
\State \Return $\cE$
\end{algorithmic}
\end{algorithm}

%

\agref{ag:explanation} shows the explanation generation procedure. 
Given the updated MMDP $\cM'$ resulting from \agref{ag:rollout}, we unfold $\cM'$ as a search tree and assign a $U$ value to each tree node following \sectref{sec:rollout}.
Let $U^\mathsf{max}$ denote the maximum $U$ value in the tree.
Then, $\tau_j$ with $j=U^\mathsf{max} + 1$ is a failed task making the query $\rho$ infeasible. 
For example, consider a user query 
$\langle \tau_1=\mathsf{obstacle\_robotI\_robotII}, \tau_2=\mathsf{victim\_robotI}, \tau_3=\mathsf{fire\_robotII\_robotIII} \rangle $, which yields $U^\mathsf{max}=0$ indicating that $\tau_1$ fails.
To pinpoint the cause of this failure, we find a set of target MMDP states $\cV$ where the failed task is completed by some agents (not necessarily by the queried agents).
All other possible states (including those not sampled) are placed in a non-target set $\overline{\cV}$.


When $\cV$ is non-empty, we obtain a minimized Boolean formula $\phi$ by applying the Quine-McCluskey algorithm~\cite{quine1952problem},
which represents the minimal description of the states in the target set $\cV$ compared to those in the non-target set $\overline{\cV}$.
We select a minterm $\epsilon$ in $\phi$ that is closest to $\rho$ (e.g., involving queried agents) and convert $\epsilon$ into an explanation using language templates.
For example, the MMDP state $\vb{s}_2$ in \figref{fig:mmdp} is a target state for $\tau_1$ based on its state label, which indicates that the obstacle task is completed by robots I and II in this state. 
Applying Quine-McCluskey yields a single-minterm formula $\phi = \mathsf{fire\_robotII} \wedge \mathsf{fire\_robotIII} \wedge \mathsf{obstacle\_robotI} \wedge \mathsf{obstacle\_robotII}$.
Recall our assumption in \sectref{sec:problem} that each agent can complete at most one task at a step.
Thus, the fire task must have been completed by robots II and III in some previous state. 
We obtain an explanation: 
\emph{``The robots cannot remove the obstacle because fighting the fire must be completed before removing the obstacle.''}

To generate correct and complete explanations for all possible failures in a user query, we update $\rho$ based on the minterm $\epsilon$ to fix the failure $\tau_j$.
Since $\epsilon$ is the closest minterm to $\rho$, the applied changes are minimal. 
We check whether the updated $\rho$ is feasible in $\cM'$ via probabilistic model checking as described in \sectref{sec:logic}. 
If the model checker yields YES, then \agref{ag:explanation} terminates because all failures of the (original) user query have been explained and fixed. 
Otherwise, the algorithm repeats lines 3-11 for the updated $\rho$.
Following the previous example, 
we update the query as $\langle \tau_1=\mathsf{fire\_robotII\_robotIII}, \tau_2=\mathsf{obstacle\_robotI\_robotII}, \tau_3=\mathsf{victim\_robotI} \rangle $, 
which results in $U^\mathsf{max}=2$, indicating that the updated query still has a failure $\tau_3=\mathsf{victim\_robotI}$.
The MMDP state $\vb{s}_3$ in \figref{fig:mmdp} is a target state where the victim task is completed. 
Applying Quine-McCluskey yields $\phi = \mathsf{victim\_robotI} \wedge \mathsf{victim\_robotIII}$, which only contains one minterm and is translated into an explanation:
\emph{``The robots cannot rescue the victim because Robot I needs Robot III to help rescue the victim.''}
We further update the query as $\langle \tau_1=\mathsf{fire\_robotII\_robotIII}, \tau_2=\mathsf{obstacle\_robotI\_robotII}, \tau_3=\mathsf{victim\_robotI\_robotIII} \rangle $, which is feasible because the MMDP path $\vb{s}_0 \to \vb{s}_1 \to \vb{s}_2 \to \vb{s}_3$ in \figref{fig:mmdp} conforms with this query. 
The algorithm terminates and returns the generated explanations of all failures.

Note that in the special case where the target states set $\cV$ is empty, we skip the Quine-McCluskey and generate an explanation to indicate that the queried task has not been completed in any observed sample. Then, we update the user query by removing the failed task and continue with \agref{ag:explanation}.

\subsection{Correctness and Complexity} \label{sec:properties}

\startpara{Correctness}
The correctness of our proposed approach, with respect to the problem formulated in \sectref{sec:problem}, is stated below and the proof is given in the appendix.  

\begin{proposition}
Given a temporal user query $\rho$ and a trained MARL policy $\pi$, if \agref{ag:overview} returns YES, then the query $\rho$ must be feasible under the policy $\pi$; otherwise, \agref{ag:overview} generates correct and complete explanations $\cE$. 
\label{thm:correctness}
\end{proposition}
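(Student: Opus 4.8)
The proposition has two halves to establish, so my plan is to split the proof into a soundness direction and a correctness-and-completeness direction, matching the two branches of \agref{ag:overview}.

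For the first half (if the algorithm returns YES, then $\rho$ is feasible), I would argue directly from the semantics of the PCTL$^*$ encoding. The algorithm returns YES only when the model checker certifies that $\cM$ (or the updated $\cM'$) satisfies $\varphi = \sP_{>0}[\lozenge(\tau_1 \wedge \lozenge(\tau_2 \wedge \lozenge \cdots))]$. By the soundness of probabilistic model checking (assumed from \cite{KNP17}), satisfaction of $\varphi$ guarantees the existence of a path $\vb{s}_0 \xrightarrow{\vb{a}_0} \vb{s}_1 \cdots$ through the MMDP along which the tasks $\tau_1, \tau_2, \dots$ become true in the queried order. The key bridging step is that every MMDP transition $(\vb{s}, \vb{a}, \vb{s}')$ is, by the construction in \sectref{sec:mmdp}, backed by at least one observed sample $(\vb{x}, \vb{a}, \vb{x}')$ drawn from an actual execution of $\pi$; hence the abstract witnessing path lifts to a concrete execution of $\pi$ that conforms with $\rho$. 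This establishes feasibility by definition. I would make explicit that intermediate (unqueried) tasks appearing along the path are harmless, since the query semantics permits them at any point.

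For the second half, I would break ``correct and complete'' into two claims, following the structure of \agref{ag:explanation}. Correctness: each explanation is emitted only after identifying a genuine failed task $\tau_j$ with $j = U^{\mathsf{max}}+1$, where $U^{\mathsf{max}}$ is the maximum $U$ value attainable in the search tree of $\cM'$. I would argue that no abstract path extends the prefix $\tau_1, \dots, \tau_{j-1}$ to also satisfy $\tau_j$ (otherwise some node would carry $U = j$, contradicting maximality of $U^{\mathsf{max}}$), so $\tau_j$ is a true cause of infeasibility; the Quine--McCluskey minterm $\epsilon$ then encodes a sound necessary condition (an unmet precondition or cooperation requirement) distinguishing target states $\cV$ from $\bar{\cV}$. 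Completeness: I would use a loop-invariant / termination argument on the \textbf{while} loop. Each iteration fixes the current earliest failure by updating $\rho$ via the minimal minterm, strictly increasing the attainable $U^{\mathsf{max}}$ (the fixed prefix grows by one), so the loop runs at most $|\rho|$ times and terminates; when it exits, the final updated query is feasible in $\cM'$, and every failure encountered along the way has been recorded in $\cE$. Combined, this shows $\cE$ explains all failures.

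The main obstacle I anticipate is pinning down the precise sense of ``complete'' and the invariant that drives termination. The subtlety is that fixing $\tau_j$ changes $\rho$, which could in principle reshuffle later failures; I would need to argue that updating $\rho$ to match $\epsilon$ never destroys the already-satisfied prefix $\tau_1, \dots, \tau_{j-1}$ and strictly advances $U^{\mathsf{max}}$, so that the sequence of fixed queries converges monotonically to a feasible one. I would also have to handle the special case where $\cV = \emptyset$ (the queried task is never completed in any sample), treated separately in the text by removing $\tau_j$ and still counting it as an explained failure. Finally, I would be careful to state completeness relative to the sampled model $\cM'$ rather than the (unknowable) true policy behavior, since the rollout in \sectref{sec:rollout} only yields infeasibility ``with high probability''; the proposition's guarantees are therefore correctness with respect to $\cM'$, and I would phrase the conclusion accordingly.
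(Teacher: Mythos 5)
Your proposal is correct and follows essentially the same two-case structure as the paper's proof: the YES branch is handled by lifting the abstract witnessing path to a concrete execution of $\pi$ (using the fact that every MMDP transition is backed by at least one observed sample), and the explanation branch is handled by a termination argument over the failure-fixing loop of \agref{ag:explanation}, including the special case $\cV = \emptyset$. Your Case 2 is in fact more detailed than the paper's — the paper merely asserts termination from finiteness of failures and then appeals to the definitions of correct and complete, whereas you supply the $U^{\mathsf{max}}$-monotonicity invariant justifying both the genuineness of each identified failure and termination, and you rightly flag that completeness should be read relative to the sampled model $\cM'$ rather than the true policy behavior, a qualification the paper's proof leaves implicit.
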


\startpara{Complexity}
We analyze the complexity of the following key steps in the proposed approach.
\begin{itemize}
    \item The time complexity of checking an MMDP against a PCTL$^*$ formula $\varphi$ defined in \sectref{sec:logic} via probabilistic model checking is double exponential in $|\varphi|$ (i.e., equal to the length of the user query $|\rho|$) and polynomial in the size of the MMDP~\cite{baier2008principles}. The MMDP state space size $|\cS|$ is bounded by $\cO({2^{|G|}}^N)$, depending on the number of agents $N$ and tasks $|G|$. However,  
    only a small set of reachable states is usually induced in practice (as shown in \tabref{tab:exp}), given a well-trained MARL policy. 
    \item The time complexity of guided rollout (\agref{ag:rollout}) is given by $\cO\big(\mathtt{RolloutNum} \cdot \mathtt{DepthLimit})$. As discussed above, the larger the parameter values of $\mathtt{RolloutNum}$ and $\mathtt{DepthLimit}$, the better approximation of MARL policy behaviors captured by the updated MMDP $\cM'$.
    \item The time complexity of explanation generation (\agref{ag:explanation}) is given by $\cO\big(\lambda \cdot (3^{N \cdot |G|} / \sqrt{ N \cdot |G|})\big)$, where $\lambda$ is the number of failures in the user query, and $\cO\big(3^{N \cdot |G|} / \sqrt{ N \cdot |G|}\big)$ is the time complexity of Quine-McClusky \cite{chandra1978number}. 
\end{itemize}

Even though the complexity is high, in practice it is possible to check query feasibility and generate explanations in reasonable times as shown in the next section.

\section{Computational Experiments} \label{sec:exp} 

To demonstrate the scalability of our approach, we developed a prototype implementation and applied it to four benchmark MARL domains
~\footnote{Code available at \href{https://github.com/kjboggess/ijcai23}{github.com/kjboggess/ijcai23}}.
\begin{enumerate}
    \item[(1)] \emph{Search and Rescue (SR)}, where multiple robotic agents cooperate to complete tasks such as fighting fires and rescuing victims~\cite{boggess2022toward}.
    \item[(2)] \emph{Level-Based Foraging (LBF)}, where agents play a mixed cooperative-competitive game to collect food scattered in a gridworld~\cite{papoudakis2021benchmarking}.
    \item[(3)] \emph{Multi-Robot Warehouse (RWARE)}, where robots collaboratively move and deliver requested goods~\cite{papoudakis2021benchmarking}. 
    \item[(4)] \emph{PressurePlate (PLATE)}, where agents are required to cooperate during the traversal of a gridworld, with some agents staying on pressure plates to open the doorway for others to proceed~\cite{pressureplate}.

\end{enumerate}

Our prototype implementation used the Shared Experience Actor-Critic~\cite{christianos2020shared} for MARL policy training and evaluation.
All models were trained and evaluated until converging to the expected reward, or up to 10,000 steps, whichever occurred first. 
The PRISM probabilistic model checker~\cite{KNP11} was applied for checking the feasibility of user queries. 
We set the guided rollout parameters as $\mathtt{RolloutNum}=10$ and $\mathtt{DepthLimit}=50$.
The experiments were run on a machine with 2.1 GHz Intel CPU, 132 GB of memory, and CentOS 7 operating system.

\tabref{tab:exp} shows experimental results. For each case study, we report the number of agents $N$, the number of tasks $|G|$, and the length of user queries $|\rho|$.
Additionally, we report the size of policy abstraction MMDPs $\cM$ in terms of the number of (reachable) states $|\cS|$ and the number of transitions $|\cT|$.
In general, the MMDP size increases with a growing number of agents and tasks.
However, an unequal distribution of agent actions under the MARL policy $\pi$ can lead to a smaller MMDP $\cM$ (e.g., LBF-5) as agents take the same trajectories more often leading to less exploration.

\begin{table}[t]
\resizebox{\columnwidth}{!}{%
\begin{tabular}{cccclrrlrlcr}
\toprule
\multicolumn{4}{c}{\textbf{Case Study}} &  & \multicolumn{2}{c}{\textbf{MMDP $\cM$}} &  & \multicolumn{1}{c}{\textbf{Feasible}} &  & \multicolumn{2}{c}{\textbf{Infeasible}} \\ \cmidrule{1-4} \cmidrule{6-7} \cmidrule{9-9} \cmidrule{11-12} 
Domain & $N$ & $|G|$ & $|\rho|$ &  & \multicolumn{1}{c}{$|\mathcal{S}|$} & \multicolumn{1}{c}{$|\mathcal{T}|$} & \multicolumn{1}{c}{} & \multicolumn{1}{c}{Time (s)} & \multicolumn{1}{c}{} & $\lambda$ & \multicolumn{1}{c}{Time (s)} \\ \cmidrule{1-4} \cmidrule{6-7} \cmidrule{9-9} \cmidrule{11-12} 
 & 3 & 3 & 3 &  & 28 & 127 &  & 0.8 &  & 1 & 2.2 \\
SR & 4 & 4 & 4 &  & 163 & 674 &  & 1.5 &  & 2 & 5.3 \\
 & 5 & 5 & 5 &  & 445 & 1,504 &  & 24.4 &  & 3 & 89.8 \\ \midrule
 & 3 & 3 & 3 &  & 67 & 344 &  & 0.9 &  & 1 & 2.9 \\
LBF & 4 & 4 & 4 &  & 211 & 781 &  & 2.1 &  & 2 & 7.6 \\
 & 5 & 5 & 5 &  & 152 & 454 &  & 4.5 &  & 3 & 20.5 \\ \midrule
 & 2 & 4 & 3 &  & 98 & 268 &  & 0.8 &  & 1 & 15.5 \\
RWARE & 3 & 6 & 5 &  & 442 & 1,260 &  & 3.7 &  & 2 & 42.2 \\
 & 4 & 8 & 8 &  & 1,089 & 2,751 &  & 21.7 &  & 3 & 85.2 \\ \midrule
 & 5 & 3 & 3 &  & 87 & 181 &  & 0.8 &  & 1 & 3.0 \\
PLATE & 7 & 4 & 4 &  & 85 & 175 &  & 0.9 &  & 2 & 25.7 \\
 & 9 & 5 & 5 &  & 132 & 266 &  & 1.4 &  & 3 & 126.8 \\
  \bottomrule
\end{tabular}%
}
\caption{Experimental results on four benchmark MARL domains.}
\label{tab:exp}
\end{table}

We consider two temporal queries (i.e., a feasible query and an infeasible query with the same length $|\rho|$) in each case study and report the runtime of \agref{ag:overview}.
For infeasible queries, we also report the number of failures $\lambda$, which were controlled to grow with the environment size as the longer the query length $|\rho|$, the larger number of task failures it may contain.
The size of generated explanations is equal to the number of failures (i.e., one for each task failure in the user query).
Experimental results show that all queries were solved efficiently within seconds. 
Checking an infeasible query is generally slower than checking a feasible query in the same case study, due to the extra time needed for guided rollout and generating explanations. 

In summary, computational experiments demonstrate that our approach can be successfully applied to various benchmark MARL domains with a large number of agents (e.g., up to 9 agents in the PLATE domain), for checking the feasibility of temporal user queries and generating explanations when needed.

\section{User Study} \label{sec:study} 

We evaluate the quality of generated reconciliation explanations via a user study.~\footnote{This study was approved by University of Virginia Institutional Review Boards IRB-SBS \#5226. } 
\sectref{sec:design} describes the study design and \sectref{sec:results} analyzes the results.

\subsection{Study Design}\label{sec:design}

\startpara{User interface}
The study was conducted via the Qualtrics survey platform. 
Instead of allowing participants to raise queries in real-time, we generated explanations for a selected set of temporal queries \emph{a priori}, which enables us to present the same set of explanations to different participants.  
\figref{fig:interface} shows an example of the user interface.
Participants were shown the agents' original plan (Plan A) and an alternate plan representing a temporal query (Plan B).
An explanation was presented to explain why Plan B was infeasible.
Participants were then asked to use the provided explanation to decide if a new query (Plan C) was feasible.
Participants were incentivized with bonus payments to answer the question correctly.

\begin{figure}[t]
     \centering
         \includegraphics[width=\columnwidth]{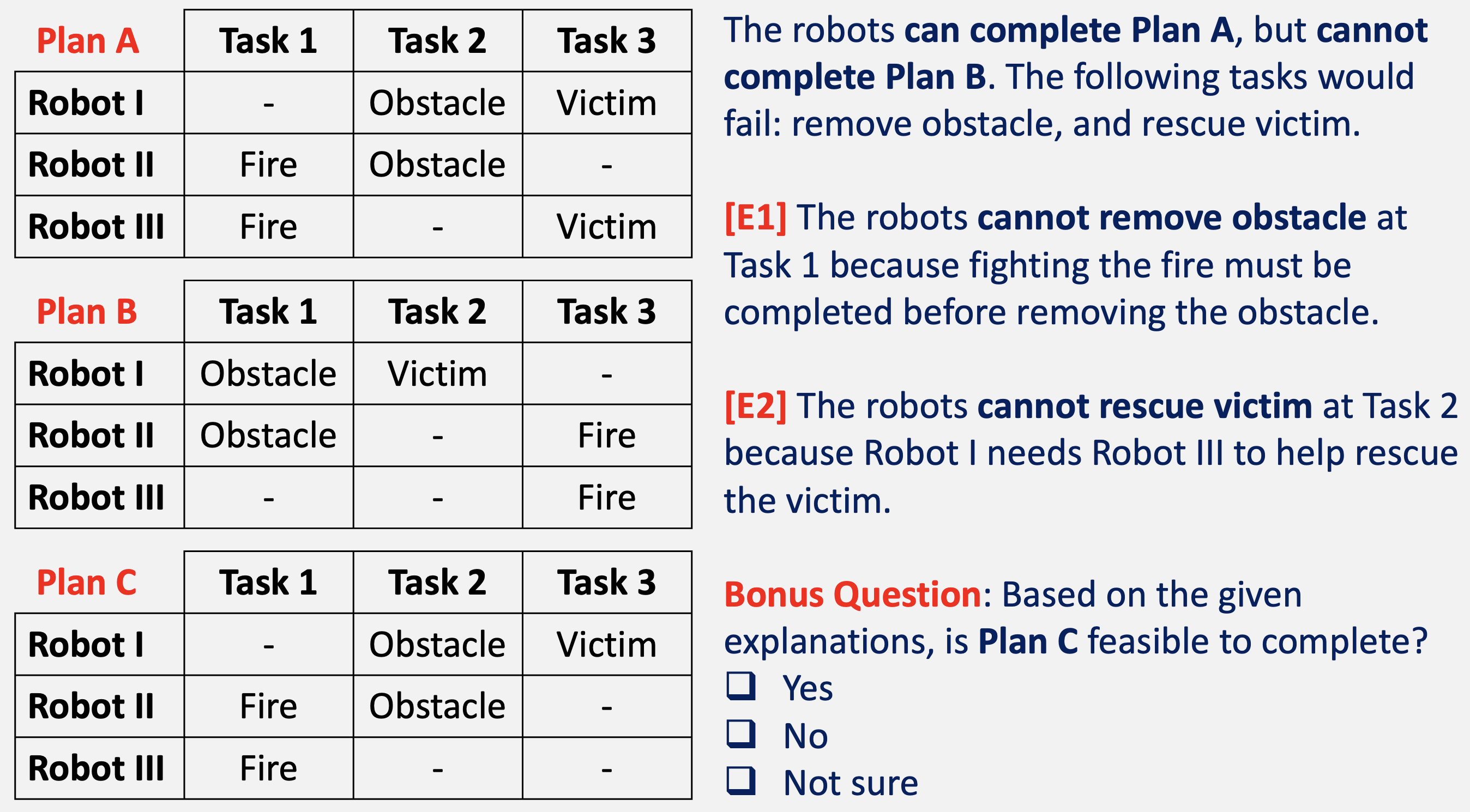}
     \caption{Example of the user study interface displaying explanations generated by the proposed approach.}
     \label{fig:interface}
\end{figure}

\startpara{Participants}
We recruited 88 participants (i.e., fluent English speakers over the age of 18) through university mailing lists (52\% male, 45.5\% female, 2.3\% non-binary).  
They had an average age of 23.9 (SD = 6.1).
To ensure data quality, a demonstration was given, attention checks were injected, and the time to complete the survey was tracked.

\startpara{Baseline}
We adapted the explanation generation method in~\cite{sreedharan2022bridging}, which was initially proposed for the single-agent setting, as a baseline for comparison. 
We extended the method for joint states and actions and limited its sampling to the given policy instead of the larger environment.
Furthermore, we use the same user interface as shown in \figref{fig:interface} to avoid any confounding variables regarding presentation in the study.
The baseline method takes the input of a user query expressed as a sequence of agent states and actions, for which we converted a high-level plan (e.g., Plan B in \figref{fig:interface}) into a low-level execution of joint states and joint actions. 
Explanations generated using the baseline method could fail to capture agent cooperation requirements in multi-agent environments. 
Moreover, the baseline method only provides explanations for the first point of failure rather than all failures in a user query. 
For example, the baseline explanations for Plan B in \figref{fig:interface} changes the second sentence in the explanation to ``The first failed task would be: remove obstacle.'' and only contains E1.
Participants would not be able to answer the bonus question correctly without knowing E2. 

\startpara{Independent variables}
We employed a within-subject study design where participants were asked to complete two trials for evaluating explanations generated using the baseline method and our proposed approach, respectively. 
There were 4 sets of temporal queries (i.e., two single-failure queries and two with multiple failures) and bonus questions in each trial. 
The queried plans and questions used in the two trials were different but had a similar difficulty level. 
Participants were presented with the same set of plans and questions and were randomly assigned to two groups (i.e., evaluating the baseline explanations before or after the proposed explanations) to counterbalance the ordering confound effect.

\startpara{Dependent measures}
We counted the number of questions correctly answered by participants as a performance measure. 
Additionally, at the end of each trial, participants were instructed to rate on a 5-point Likert scale (1 - strongly disagree, 5 - strongly agree) the following statements regarding \emph{explanations good metrics} adapted from~\cite{hoffman2018metrics}.
\begin{itemize}
    \item The explanations help me \emph{understand} how the robots complete the mission.
    \item The explanations are \emph{satisfying}.
    \item The explanations are sufficiently \emph{detailed}.
    \item The explanations are sufficiently \emph{complete}, that is, they provide me with all the needed information to answer the questions.
    \item The explanations are \emph{actionable}, that is, they help me know how to answer the questions.
    \item The explanations let me know how \emph{reliable} the robots are for completing the mission.
    \item The explanations let me know how \emph{trustworthy} the robots are for completing the mission.
\end{itemize}

\startpara{Hypotheses}
We tested two hypotheses stated below. 
\begin{itemize}
    \item \textbf{H1:} Explanations generated by our proposed approach \emph{enable participants to answer more questions correctly} than the baseline explanations.  
    \item \textbf{H2:} Explanations generated by our proposed approach \emph{lead to higher ratings on explanation goodness metrics} than the baseline explanations. 
\end{itemize}

\subsection{Results Analysis}\label{sec:results}

\startpara{Question-answering performance}
Participants were able to answer more questions correctly based on explanations generated by our proposed approach (M=3.1 out of 4, SD=1.0) than those generated with the baseline method (M=0.6 out of 4, SD=0.8).
A paired t-test ($\alpha=0.05$) shows a statistically significant difference ($t$(87)=-17.0, $p\le$0.01, $d$=1.8). 
\emph{Thus, the data supports H1.}

Recall that the baseline method only provides explanations for the first point of failure in a user query and could not always correctly identify agent cooperation requirements.
By contrast, our approach generates correct and complete explanations for all failures in a user query, which help participants to better understand agent behaviors under a given policy, and thus, leads to better question-answering performance. 

\startpara{Explanation goodness ratings}
\figref{fig:Goodfig} shows that participants gave higher subjective ratings to the proposed explanations than the baseline explanations on average, with respect to \emph{all} explanation goodness metrics.

\begin{figure}[tb]
    \centering
    \includegraphics[width=0.8\columnwidth]{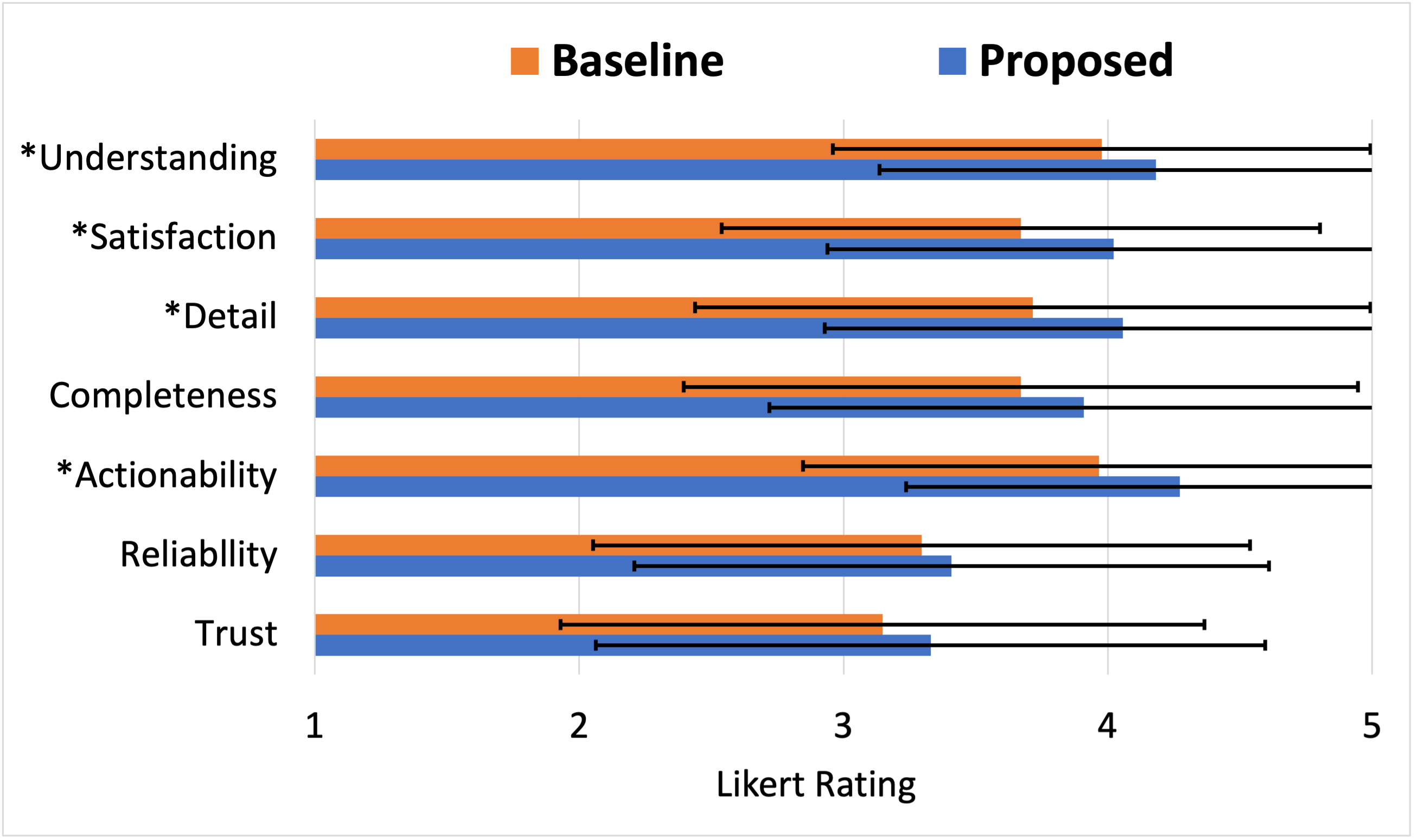}
    \caption{Mean and SD of participant ratings on explanation goodness metrics (``*'' indicates statistically significant difference with the significant level set as $\alpha=0.05$).}
    \label{fig:Goodfig}
    \vspace{-10pt}
\end{figure}

We used the Wilcoxon signed-rank test ($\alpha=0.05$) to evaluate hypothesis H2.
Statistically significant differences were found for the following four metrics:
\textit{understanding} ($W$=315.0, $Z$=-1.6, $p \leq$0.05, $r$=-0.1), 
\textit{satisfaction} ($W$=236.0, $Z$=-2.2, $p \leq$0.01, $r$=-0.2), 
\textit{detail} ($W$=255.0, $Z$=-1.6, $p \leq$0.01, $r$=-0.1), 
and \textit{actionability} ($W$=105.5, $Z$=-2.0, $p \leq$0.02, $r$=-0.1). 
But no significant difference was found on other metrics:
\textit{completeness} ($W$=389.5, $Z$=-1.2, $p \leq$ 0.1, $r$=-0.1),
\textit{reliability} ($W$=255.5, $Z$=-0.5, $p \leq$0.4, $r$=-0.04),
and \textit{trust} ($W$=181.5, $Z$=-1.0, $p \leq$0.07, $r$=-0.1).
\emph{Thus, the data partially supports H2.}

Participants' superior question-answering performance is consistent with their statistically significant higher subjective ratings on understanding, detail, and actionability (i.e., the proposed explanations provide detailed and actionable information for answering questions).
Furthermore, the baseline explanations were rated significantly less satisfying, because they may miss essential information (e.g., agent cooperation) for answering questions.
Participants may misjudge the explanations’ completeness as they were unaware of the total number of failures in a queried plan. 
Finally, the generated explanations are mostly about missing task preconditions, which are less useful for participants to judge how reliable and trustworthy the robots are for completing the mission. 

\startpara{Summary}
Results of the user study show that, compared with the baseline, explanations generated by our proposed approach significantly improve participants' performance in correctly answering questions, and lead to higher average ratings on explanation goodness metrics such as understanding and satisfaction.

\section{Conclusion} \label{sec:conclu} 

This work presents an approach for generating policy-level contrastive explanations for MARL to answer a temporal user query, which specifies a sequence of tasks to be completed by agents with possible cooperation. 
The proposed approach checks if the user query is feasible under the given MARL policy and, if not, generates correct and complete explanations to pinpoint reasons that make a user query infeasible.
A prototype implementation of the proposed approach was successfully applied to four benchmark MARL domains with a large number of agents (e.g., up to 9 agents in one domain).
In all the experiments, it only took seconds to check the feasibility of a user query and generate explanations when needed.
Additionally, a user study was conducted to evaluate the quality of generated explanations. The study results show that explanations generated using the proposed approach can help improve user performance, understanding, and satisfaction. 

There are several directions to explore for possible future work. 
First, we will evaluate the proposed approach with different MARL methods. 
While the prototype implementation only uses one MARL algorithm, the proposed approach should be compatible with any MARL method because it only relies on sampling possible MARL executions.
Second, we will leverage the expressiveness of PCTL$^*$ logic and investigate a richer set of user queries. For example, a ``coverage'' query which specifies a set of tasks to be covered in any order, and a ``sequencing with avoidance'' query which asks for the completion of a sequence of tasks while avoiding some other tasks to be completed by specific agents.
Lastly, we would like to apply the proposed approach to a wide range of MARL environments in real-world scenarios.

\clearpage

\section*{Acknowledgments}
This work was supported in part by U.S. National Science Foundation under grant CCF-1942836, 
U.S. Office of Naval Research under grant N00014-18-1-2829, 
U.S. Air Force Office of Scientific Research under grant FA9550-21-1-0164,
Israel Science Foundation under grant 1958/20, 
and the EU Project TAILOR under grant 952215.
Any opinions, findings, and conclusions or recommendations expressed in this material are those of the author(s) and do not necessarily reflect the views of the grant sponsors.

\appendix
\setcounter{proposition}{0}

\section{Appendix}

\begin{proposition}
Given a temporal user query $\rho$ and a trained MARL policy $\pi$, if \agref{ag:overview} returns YES, then the query $\rho$ must be feasible under the policy $\pi$; otherwise, \agref{ag:overview} generates correct and complete explanations $\cE$.  
\label{thm:correctness}
\end{proposition}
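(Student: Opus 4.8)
The plan is to split the argument according to what \agref{ag:overview} returns and treat the two conjuncts of the statement separately. The algorithm returns YES only at line 4 or line 8, i.e.\ exactly when $\cM \models \varphi$ or $\cM' \models \varphi$. The key observation for this direction is that both $\cM$ and $\cM'$ are built purely from abstract transitions $(\vb{s},\vb{a},\vb{s}')$ that summarize concrete samples $(\vb{x},\vb{a},\vb{x}')$ observed while executing $\pi$ (\sectref{sec:mmdp}), so every finite path of the MMDP projects back to a genuine execution of the policy. I would first invoke that probabilistic model checking is exact: $\cM \models \varphi$, with $\varphi = \sP_{>0}[\lozenge(\tau_1 \wedge \lozenge(\tau_2 \wedge \cdots))]$, holds iff there is a concrete path of $\cM$ along which $\tau_1$ becomes true at some state, $\tau_2$ at some later state, and so on. Lifting this witnessing path back to the policy execution from which it was constructed yields an execution of $\pi$ conforming with $\rho$, which is precisely the definition of $\rho$ being feasible (\sectref{sec:problem}). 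Hence YES implies feasibility; note this direction is \emph{exact} and is independent of how good an approximation the sampled MMDP is.

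When the algorithm does not return YES, $\rho$ is infeasible in $\cM'$ and \agref{ag:explanation} is invoked, and I would establish correctness by a single-iteration argument. By the definition of the $U$-labelling in \sectref{sec:rollout}, $U^\mathsf{max}$ is the length of the longest query prefix achievable along any path of $\cM'$; therefore $\tau_j$ with $j = U^\mathsf{max}+1$ is genuinely unsatisfiable once the prefix $\tau_1,\dots,\tau_{U^\mathsf{max}}$ has been met, so it is a real failure. The Quine--McCluskey step then produces a minimized Boolean formula $\phi$ separating the target states $\cV$, where the failed task is completed, from the non-target states $\bar{\cV}$; every minterm of $\phi$ is a faithful description of the conditions under which the task can be completed, so the selected closest minterm $\epsilon$ yields a correct explanation of why the queried form of $\tau_j$ fails. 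The empty-$\cV$ branch is correct trivially, since it only reports that the task was never completed in any sampled behavior.

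For completeness and termination I would set up a loop invariant: at the start of each iteration every explanation already in $\cE$ pinpoints a distinct failure of the original query, and the updated $\rho$ agrees with the original on all already-fixed prefix tasks. The crucial step is to show that the update on line 12 strictly increases the achievable conformance: replacing $\tau_j$ by the closest feasible minterm $\epsilon$ (or deleting the task when $\cV=\emptyset$) guarantees that on the next iteration the longest achievable prefix strictly exceeds the previous $U^\mathsf{max}$. Thus $U^\mathsf{max}$ increases monotonically and is bounded above by $|\rho|$, so the while-loop runs at most $\lambda \le |\rho|$ times and terminates exactly when $\rho$ becomes feasible in $\cM'$, at which point every failure has been assigned an explanation, giving completeness.

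The delicate point, and the step I expect to need the most care, is the scope of ``correct and complete'': the failures are detected in the sampled abstraction $\cM'$, not in the exact semantics of $\pi$. Consistent with the ``high probability'' caveat of \sectref{sec:rollout}, the honest statement is that correctness and completeness hold with respect to $\cM'$, and bridging to $\pi$ itself relies on the guided-rollout approximation being faithful. I would therefore phrase both the invariant and the termination argument entirely in terms of $\cM'$, and the monotonicity of $U^\mathsf{max}$ under the line-12 update — in particular arguing that the minimal fix never reintroduces an earlier failure — is the part I expect to require the most careful justification.
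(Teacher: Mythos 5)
Your proposal is correct and follows essentially the same two-case structure as the paper's proof: for the YES case, lifting a satisfying path of $\cM$ or $\cM'$ back to a sampled execution of $\pi$ (since every abstract transition corresponds to an observed sample), and for the explanation case, arguing termination of \agref{ag:explanation} together with correctness/completeness by construction. Your version is in fact somewhat more rigorous than the paper's — the explicit ranking-function termination argument via monotonicity of $U^\mathsf{max}$ refines the paper's brief ``finitely many failures, each fixable or removable'' claim, and your caveat that correctness and completeness are established relative to the sampled abstraction $\cM'$ rather than the exact semantics of $\pi$ makes explicit a scope restriction the paper leaves implicit.
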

\begin{proof}
We prove the following two cases. 

\noindent \emph{Case 1:} When \agref{ag:overview} returns YES, the policy abstraction MMDP $\cM$ or the updated MMDP $\cM'$ satisfies the PCTL$^*$ formula $\varphi$ encoding the user query $\rho$, indicating that there must exist a path through $\cM$ or $\cM'$ that conforms with $\rho$. 
By construction, every abstract MMDP transition $(\vb{s},\vb{a},\vb{s}')$ in $\cM$ or $\cM'$ with non-zero probability maps to at least one sampled decision $(\vb{x},\vb{a},\vb{x}')$ of the given MARL policy $\pi$. 
Thus, there must exist an execution of policy $\pi$ that conforms with the user query $\rho$. 
By definition, the user query $\rho$ is feasible under the given MARL policy $\pi$.

\noindent \emph{Case 2:} \agref{ag:overview} returns explanations $\cE$ generated via \agref{ag:explanation}.
As described in \sectref{sec:qm}, \agref{ag:explanation} terminates when all failures in the user query $\rho$ have been explained and fixed. 
Given a finite-length temporal query $\rho$, there is a finite number of failures. 
For any failure in the query, if the target states set $\cV$ is non-empty, then the failure must be fixable using a Quine-McCluskey minterm that represents a target state where the failed task is completed. 
If $\cV$ is empty, then the failure is removed from the query. 
Thus, the termination of \agref{ag:explanation} is guaranteed. 
By definition, the generated explanations are \emph{correct} (i.e., identifying the causes of one or more failures in $\rho$) and \emph{complete} (i.e., finding the reasons behind all failures in $\rho$).
\end{proof}


\bibliographystyle{named}
\bibliography{references}

\end{document}